\DeclareSymbolFont{matha}{OML}{txmi}{m}{it}
\DeclareMathSymbol{\varv}{\mathord}{matha}{118}
\journal{Journal of \LaTeX\ Templates}
\newcommand{\independent}{\perp\mkern-9.5mu\perp}
\newtheorem{definition}{Definition}[section]
\newtheorem{assumption}{Assumption}[section]
\newtheorem{theorem}{Theorem}[section]
\newproof{Example}{Example}
\newproof{Method}{Method}
\newproof{Exercise}{Exercise}
\newproof{proof}{Proof}
\begin{document}

\begin{frontmatter}

\title{Stochastic Intervention for Causal Inference via Reinforcement Learning}


\author[mymainaddress]{Tri Dung Duong\corref{mycorrespondingauthor}}
\cortext[mycorrespondingauthor]{Corresponding author}
\ead{TriDung.Duong@student.uts.edu.au}
\address[mymainaddress]{Faculity of Engineering and Information Technology, University of Technology Sydney, \\NSW, Australia}

\author[mymainaddress]{Qian Li}
\ead{Qian.Li@uts.edu.au}

\author[mymainaddress]{Guandong Xu}
\ead{Guandong.Xu@uts.edu.au}





\begin{abstract}
Causal inference methods are widely applied in various decision-making domains such as precision medicine, optimal policy and economics. 
Central to causal inference is the treatment effect estimation of intervention strategies, such as changes in drug dosing and increases in financial aid. 
Existing methods are mostly restricted to the deterministic treatment and compare outcomes under different treatments. However, they are unable to address the substantial recent interest of treatment effect estimation under stochastic treatment, e.g., ``how all units health status change if they adopt 50\% dose reduction''.  
In other words, they lack the capability of providing fine-grained treatment effect estimation to support sound decision-making.
In our study, we advance the causal inference research by proposing a new effective framework to estimate the treatment effect on stochastic intervention.
Particularly, we develop a stochastic intervention effect estimator (SIE) based on nonparametric influence function, with the theoretical guarantees of robustness and fast convergence rates. Additionally, we construct a customised reinforcement learning algorithm based on the random search solver which can effectively find the optimal policy to produce the greatest expected outcomes for the decision-making process. Finally, we conduct an empirical study to justify that our framework can achieve significant performance in comparison with state-of-the-art baselines.


\end{abstract}

\begin{keyword}
stochastic intervention effect, treatment effect estimation, causal inference.
\end{keyword}

\end{frontmatter}

\section{Introduction}
 Causal inference aims at estimating the causal effects of an intervention or treatment on an outcome, which increasingly plays a vitally important role in scientific investigations and real-world applications.  
    A widely used example is the causal effect for a binary treatment, in which the expectation of the outcome in a hypothetical world in which everybody receives treatment is compared with its counterpart in a world in which nobody does\footnote{\emph{Treatment} and \emph{outcome} are terms in the theory of causal inference, which for example denote a promotion strategy taken and its resulting profit, respectively}.
    Other examples include ``What is the effect of sleep deprivation on health outcomes?'' and  ``How would family socio-economic status affect career prospects?''.  Therefore, it is of great interest to develop models that can correctly predict the optimal treatment based on given subject characteristics. Treatment effect estimation can address this by comparing outcomes under different treatments.  

 Estimating treatment effect is challenging, because only the factual outcome for a specific treatment assignment (say, treatment \texttt{A}) is observable, while the counterfactual outcome corresponding to alternative treatment \texttt{B} is usually unknown.
    Aiming at deriving the absent counterfactual outcomes, existing causal inference from observations methods can be categorized into these main branches: re-weighting methods \cite{gruber2011tmle, austin2015moving}, tree-based methods \cite{chipman2007bayesian, hill2011bayesian, wager2018estimation}, matching methods \cite{rosenbaum1983central, dehejia2002propensity,stuart2011matchit} and doubly robust learners \cite{econml, 10.5555/3104482.3104620}. 
    In general, the matching approaches focus on finding the comparable pairs based on distance metrics such as propensity score or Euclidean distance, while re-weighting methods assign each unit in the population a weight to equate groups based on the covariates. Meanwhile, tree-based machine learning models including decision tree or random forest are utilized in the tree-based approach to derive the counterfactual outcomes. Doubly Robust Learner is another recently developed approach that combines propensity score weighting with the regression outcome to produce an unbiased and robust estimator. 
    
    

    
    However, recent studies in treatment effect estimation mainly focus on the deterministic intervention which sets each individual a deterministic treatment, incapable of dealing with dynamic and stochastic intervention~\cite{dudik2014doubly, pearl2000models, tian2012identifying}. i.e.,  the treatment is deterministic.  In many real-world applications, however, the effect of a stochastic intervention might be of interest,
    For example, rather than ``if we do not use the medicine treatment \texttt{A} for all units, what is resulting in health status (the desired outcome)?'', the medical researcher is more eager to know  ``how all units' health status change if we adopt 50\% dose reduction in medicine treatment \texttt{A}''.  
    In this case, the treatment variable is no longer deterministic but a stochastic value, and traditional causal inference methods fail to capture the stochastic intervention on the treatment variable. 
    

    To address these issues, we propose a novel influence function based model to provide sufficient causal evidence to answer decision-making questions about stochastic interventions. 
    Stochastic intervention estimation in our method can provide a fine-grained treatment effect estimation to gradually quantify the effect of the stochastic intervention on the outcomes. 
    In addition, we exploit stochastic intervention optimization to  customize stochastic intervention assignment, i.e., what is the best degree of intervention on the treatment to achieve the desired outcome.
    The main contributions of our work are summarized below:
    
\begin{itemize}
    \item 
    We propose a causal inference framework with stochastic intervention to learn the treatment effect trajectory, which tackles the limitation of existing approaches only dealing with deterministic intervention effects. Particularly, our framework introduces the concept of stochastic propensity score, and develops a semi-parametric influence function to learn stochastic intervention effect.   
    \item 
    Based on the general efficiency theory, we theoretically analyze the asymptotic behavior of our semi-parametric influence function. We prove that our influence function can achieve double robustness and fast parametric convergence rates. We also empirically demonstrate the effectiveness of the proposed influence function. 
    \item Based on the stochastic treatment effect estimation, our framework is capable of customizing the stochastic intervention, with the goal of uplifting desired outcomes on downstream decision-making applications.    
    We formulate the stochastic intervention optimization as a derivative-free optimization problem and design a random search solver to efficiently achieve the optimal expected outcome.
    

    
\end{itemize}



\section{Related works}
\label{section:related}
Conventionally, causal inference can be trickled by either the randomized experiment (also known as A/B testing in online settings) or observational data. 
    In randomized experiment, units are randomly assigned to a treatment and their outcomes are recorded. One treatment is selected as the best among the alternatives by comparing the predefined statistical criteria. While randomized experiments have been popular in traditional causal inference, it is prohibitively expensive \cite{chan2010evaluating, kohavi2011unexpected} and infeasible \cite{bottou2013counterfactual} in some real-world settings.
    As an alternative method, observational study is becoming increasingly critical and available in many domains such as medicine, public policy and advertising. 
    However, observational study needs to deal with data absence problem, which differs fundamentally from supervised learning. This is simply because only the factual outcome (symptom) for a specific treatment assignment (say, treatment \texttt{A}) is observable, while the counterfactual outcome corresponding to alternative treatment \texttt{B} in the same situation is always unknown. In the context of binary treatment, the individuals given the treatment are the treated group, whereas other individuals in the population are the control group. 
    
\subsection{Treatment Effect Estimation} 
The simplest way to estimate treatment effect in observational data is the matching method that finds the comparable units in the treated and controlled groups. The prominent matching methods include Propensity Score Matching (PSM) \cite{rosenbaum1983central, dehejia2002propensity} and Nearest Neightbor Matching (NNM) \cite{stuart2011matchit}. Particularly, for each treated individual, PSM and NNM select the nearest units in the controlled group based on some distance functions, and then calculate the difference between two paired outcomes. Another popular approach is reweighting method that involves building a classifier model to estimate the probability of a treatment assigned to a particular unit, and uses the predicted score as the weight for each unit in dataset. TMLE~\cite{gruber2011tmle} and IPSW~\cite{austin2015moving} fall into this category. Ordinary Linear Regression (OLS)~\cite{goldberger1964econometric} is another commonplace method that fits two linear regression models for the treated and controlled group, with each treatment as the input features and the outcome as the output. The predicted counterfactual outcomes thereafter are used to calculate the treatment effect. Meanwhile, decision tree is a popular non-parametric machine learning model, attempting to build the decision rules for the regression and classification tasks. Bayesian Additive Regression Trees (BART) \cite{chipman2007bayesian, hill2011bayesian} and Causal Forest \cite{wager2018estimation} are the prominent tree-based method in causal inference. While BART~\cite{chipman2007bayesian, hill2011bayesian} builds the decision tree for the treated and controlled units, Causal Forest~\cite{wager2018estimation} constructs the Random Forest model to derive the counterfactual outcomes, and then calculates the difference between the paired potential outcomes to obtain the average treatment effect. They are proven to obtain the more accurate treatment effect than matching methods and reweighting methods in the non-linear outcome setting.
Doubly Robust Learner \cite{econml, 10.5555/3104482.3104620} is the recently proposed approach that constructs a regression estimator predicting the outcome based on the covariates and treatment, and builds a classifier model to fit the treatment. DRL finally combines both predicted propensity score and predicted outcome to estimate treatment effect.

\subsection{Stochastic Intervention Optimization}
Our work focuses on estimating the intervention effect and thus finding the optimal intervention to maximize the expected outcomes in the population. This is closely related to the uplift modelling studies, with the goal of uplifting (or maximizing) the outcome with the treatment as compared to the outcome without the treatment\cite{zaniewicz2013support, Personalized_Medicine, hansotia2002incremental, manahan2005proportional}. 
Among the uplifting models, the most popular and widely-used approach is Separate Model Approach (SMA) \cite{zaniewicz2013support,Personalized_Medicine}.
SMA is applicable to binary treatment and builds two regression models under each treatment, respectively. The treatment with the best predictive outcome is chosen and defined as the optimal one. The advantage of SMA lies in its easy implementation since SMA does not require a specific machine learning algorithm.
Several state-of-the-art machine learning algorithms such as Random Forest, Gradient Boosting Regression or Adaboost can be applied to these two regression models \cite{liaw2002classification, solomatine2004adaboost, friedman2001greedy}. SMA has been widely applied in marketing \cite{hansotia2002incremental} and customer segmentation \cite{manahan2005proportional}. However, when dealing with the data containing a great deal of noisy and missing information, the model outcomes are prone to be incorrect and biased, which leads to poor performance. Other commonplace methods for uplift modelling include Class Transformation Model \cite{jaskowski2012uplift} and Uplift Random Forest \cite{guelman2014package}; these techniques however only deal with the binary outcome, so we do not discuss them here. 

\section{Preliminaries and Problem Definition}
\label{section:pre}
\subsection{Notation}


In this study, we consider the observational dataset $Z=\{\boldsymbol{x}_i,{y}_i,t_i\}_{i=1}^n$ with $n$ units, where $\boldsymbol{x}\in\mathbb{R}^{n\times d}$ is the $d$-dimensional covariate, 
$y$ and $t\in\{0,1\}$ are the outcome and the treatment for the unit, respectively.
The treatment variable is binary in many cases, thus the unit will be assigned to the control treatment if $t=0$, or the treated treatment if $t=1$. As a result, $y_0$ and $y_1$ are the potential outcomes corresponding to the control and treated units. 
According to the Rubin-Neyman causal model~\cite{imbens2015causal}, 
two potential outcomes $y_0(\boldsymbol{x})$ and $y_1(\boldsymbol{x})$
exist for $\boldsymbol{x}$ with the treatment $t=0$ and $t=1$, respectively. It is noted that either $y_0$ or $y_1$ can be observed for each subject in the population. 

After introducing the observational data and the key terminologies, the central goal of causal inference, i.e., treatment effect estimation, can be quantitatively definitions using the above definitions. To make the definition clear, here we define the treatment
effect under binary treatment. At the population level, the treatment effect is named as the Average Treatment Effect (ATE), which is defined as
\begin{equation}
    \tau_{\text{ATE}}=\mathbb{E}[y_0(\boldsymbol{x})-y_1(\boldsymbol{x})]
    \label{eq:tau}
\end{equation}
For causal inference, our objective is to estimate the treatment effects from the observational data. Based on the estimated ATE, different treatment conditions can be selected and applied to users to achieve preferred outcome.

To further illustrate the treatment effect estimation, we take an online marketing scenario for example. 
We denote each customer as a high-dimensional vector of features $\boldsymbol{x}$. The customer indexed by $i$ receives a promotion treatment $t_i\in\{0,1\}$. Accordingly, $y_0(\boldsymbol{x}_i)$ and $y_1(\boldsymbol{x}_i)$ are profit accrued from customer $i$ corresponding to either control treatment or treated treatment.
The effectiveness of a promotion campaign can be evaluated by computing average treatment effect of the promotion treatment on the customers.

\subsection{Propensity Score}
Rosenbaum and Rubin~\cite{rosenbaum1983central} first proposed propensity score technique to deal with the high-dimensional covariates. The propensity score is widely used in causal inference methods to estimate treatment effects from observational data~\cite{hirano2003efficient, pirracchio2016propensity, luo2010applying,abdia2017propensity}. This is largely because propensity score can help eliminating the great portion of bias, leading to a more balanced dataset and thus allowing a simple and direct comparison between the treated and untreated individuals. Particularly, propensity score can summarise the mechanism of treatment assignment and thus squeezes covariate space into one dimension to avoid the possible data sparseness issue~\cite{bang2005doubly, dehejia2002propensity, austin2015moving, hirano2003efficient}.  The propensity score is defined as the probability that a unit is assigned to a particular treatment $t=1$ given the covariate $\boldsymbol{x}$, i.e.,
\begin{equation}
    p_t(\boldsymbol{x}) = \mathbb{P}(t = 1 | \boldsymbol{x})
\end{equation}


In practice, one widely-adopted parametric model for estimating propensity score $p_t(\boldsymbol{x})$ 
is the logistic regression
\begin{equation}
    \hat{p}_t(\boldsymbol{x})=\frac{1}{1+
\exp{(\boldsymbol{w}^{\top}\boldsymbol{x}+\omega_0)}}
\label{eq:ps}
\end{equation} 
where $\boldsymbol{w}$ and $\omega_0$ are parameters estimated 
by minimizing the negative log-likelihood~\cite{martens2008systematic}.

\subsection{Assumption}\label{sc:asp} Following the general practice in causal inference literature \cite{pearl2010causal, pearl2003statistics, scheines1997introduction}, we consider the following two assumptions to ensure the identifiability of the treatment effect, i.e. \emph{Positivity} and \emph{Ignorability}. 

\begin{assumption} [Positivity] Each unit has a positive probability to be assigned by a treatment, i.e.,
\begin{equation}
    p_t(\boldsymbol{x}) >0, \quad\forall \boldsymbol{x}\text{ and } t
\end{equation}
\end{assumption}



\begin{assumption}[Ignorability]
The assignment to the treatment $t$ is independent of the outcomes $\boldsymbol{y}$ given covariates $\boldsymbol{x}$
\begin{equation}
   y_1 , y_0  \independent t|\boldsymbol{x}
\end{equation}
\end{assumption}

\section{Stochastic Intervention Effect}
\label{section:estimation}




Recall the goal of causal inference is to compute the treatment effect estimation that can be evaluated by the metric in Eq.~\eqref{eq:tau}. Namely, treatment effect estimation can be expressed by the difference between the observed outcome and the counterfactual outcome under a intervention on the treatment. Apparently, the observed outcome in the dataset is generated by the observed treatment (e.g., $t=1$). By contrast, the counterfactual outcome is generated by intervening the treatment, e.g., shifting treatment from observed $t=1$ to counterfactual $t=0$, which is however unobserved in practice.
Thus, intervention effect estimation is turned into a problem of predicting the counterfactual outcome generated by an intervention on the treatment. 

\subsection{Stochastic Counterfactual Outcome}
Before predicting the counterfactual outcome, we first propose stochastic propensity score to characterize the stochastic intervention.
 




\begin{definition}[Stochastic Propensity Score]{ The stochastic propensity score with respect to stochastic degree $\delta$ is 
\begin{equation}
    q_{t}( \boldsymbol{x},\delta) = \frac{\delta \cdot\hat{p}_t(\boldsymbol{x})}{\delta \cdot\hat{p}_t(\boldsymbol{x}) + 1 - \hat{p}_t(\boldsymbol{x})}
    \label{eq:incrps}
\end{equation}
and $\hat{p}_t(\boldsymbol{x})$ is denoted by
}
\begin{equation}
    \hat{p}_{t}(\boldsymbol{x})=\frac{\exp \left(\sum_{j=1}^{s} \beta_{j} g_{j}\left(\boldsymbol{x}\right)\right)}{1+\exp \left(\sum_{j=1}^{s} \beta_{j} g_{j}\left(\boldsymbol{x}\right)\right)}
    \label{eq:ourps}
\end{equation}
\label{def:ips}
where $\{ g_1, \cdots, g_s\}$ are nonlinear basis functions.
\end{definition}


The proposed stochastic propensity score in Definition \ref{def:ips} has two promising properties compared with classical propensity score in \eqref{eq:ps}. 
On the one hand, classical propensity score in \eqref{eq:ps} fails to quantify the causal effect under stochastic intervention. 
So we introduce $\delta$ in \eqref{eq:incrps} to represent the stochastic intervention indicating the extent to which the propensity scores are fluctuated from their actual observational values. 
For instance, the stochastic intervention that the doctor adopts 50\% dose increase in the patient can be expressed by $\delta = 1.5$. 
On the other hand, if there are higher-order terms or non-linear trends among covariates $\boldsymbol{x}$, classical propensity score using $\boldsymbol{w}^{\top}\boldsymbol{x}+\omega_0$ in Eq.~\eqref{eq:ps} may lead to misspecification ~\cite{dalessandro2012causally} .
So we propose to use a sum of nonlinear function $\sum_{j=1}^{s} \beta_{j} g_{j}$ in  \eqref{eq:ourps} that captures the non-linearity involving in covariates to create an unbiased estimator of treatment effect.

On the basis of the stochastic propensity score, we propose an influence function specific to estimate counterfactual outcome under stochastic intervention. 
Meanwhile, we also analyze the asymptotic behavior of the counterfactual outcome with theoretical guarantees. We prove that our influence function can achieve double robustness and fast parametric convergence rates. 


\begin{theorem}
\label{th:sto}
With the stochastic intervention of degree $\delta$ on observed data $z=(\boldsymbol{x},y,t)$, we have
\begin{equation}
    \varphi(z,\delta)=q_{t}(\boldsymbol{x},\delta)\cdot m_1(\boldsymbol{x},y)+(1-q_{t}(\boldsymbol{x},\delta))\cdot m_0(\boldsymbol{x},y)
    \label{eq:varphi}
\end{equation}
being the efficient influence function for the resulting counterfactual outcome $\hat{\psi}$, i.e., 
\begin{equation}
    \hat{\psi}
    =\mathbb{P}_{n}\left[\varphi(z,\delta)\right]
\label{eq:e_psi}
\end{equation}
where $m_1(\boldsymbol{x},y)$ or $m_1(\boldsymbol{x},y)$ is given by
\begin{equation}
    m_t(\boldsymbol{x},y)=\frac{\mathbb{I}_{t}\cdot({y-\hat{\mu}(\boldsymbol{x},t)})}{t\cdot\hat{p}_t(\boldsymbol{x})+(1-t)(1-\hat{p}_t(\boldsymbol{x}))}+\hat{\mu}(\boldsymbol{x},t)
    \label{eq:m}
\end{equation}
and $\mathbb{I}_{t}$ is an indicator function, $\hat{p}_t$ is the estimated propensity score in Eq.~\eqref{eq:ourps} and $\hat{\mu}$ is potential outcomes model that can be fitted by machine learning methods.
\end{theorem}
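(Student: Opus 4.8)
The plan is to derive $\varphi(z,\delta)$ as the pathwise (Gateaux) derivative of the target functional, following standard semiparametric efficiency theory. First I would make the estimand explicit: under the stochastic intervention that reassigns treatment with probability $q_t(\boldsymbol{x},\delta)$, and invoking the Positivity and Ignorability assumptions of Section~\ref{sc:asp}, the mean counterfactual outcome is identified as
\begin{equation}
\psi(\delta)=\mathbb{E}\big[q_t(\boldsymbol{x},\delta)\,\mu(\boldsymbol{x},1)+(1-q_t(\boldsymbol{x},\delta))\,\mu(\boldsymbol{x},0)\big],
\end{equation}
where $\mu(\boldsymbol{x},t)=\mathbb{E}[y\mid \boldsymbol{x},t]$ is the true outcome regression. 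The object to verify is that $\varphi(z,\delta)-\psi(\delta)$ is the efficient influence function of $\psi(\delta)$ in the nonparametric model, so that $\hat\psi=\mathbb{P}_n[\varphi(z,\delta)]$ is the associated estimating-equation estimator.

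Next I would introduce a regular one-dimensional parametric submodel $\{P_\epsilon\}$ through the true distribution $P$ at $\epsilon=0$ with score $S(z)$; equivalently, use the point-mass contamination $P_\epsilon=(1-\epsilon)P+\epsilon\,\delta_{\tilde z}$ and read the influence function off $\tfrac{d}{d\epsilon}\psi(P_\epsilon)\big|_{\epsilon=0}$ evaluated at $\tilde z$. The functional $\psi(\delta)$ depends on $P$ through three nuisance components: the marginal law of $\boldsymbol{x}$, the outcome regression $\mu(\boldsymbol{x},t)$, and the propensity score $p_t(\boldsymbol{x})$ (which enters only through $q_t$). Differentiating term by term via the chain and product rules, the covariate-distribution part reproduces the plug-in term $q_t\mu_1+(1-q_t)\mu_0$, while the outcome-regression part contributes the inverse-probability-weighted residuals $\mathbb{I}_t(y-\hat\mu)/[\,t\hat p_t+(1-t)(1-\hat p_t)\,]$ attached to the appropriate weight $q_t$ or $1-q_t$; together these assemble into the doubly-robust pseudo-outcomes $m_1$ and $m_0$ of Eq.~\eqref{eq:m} and hence into the stated form~\eqref{eq:varphi}.

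The main obstacle will be the propensity-score contribution, since $q_t(\boldsymbol{x},\delta)$ depends nonlinearly on $p_t(\boldsymbol{x})$ through Eq.~\eqref{eq:incrps}. Differentiating this requires the factor $\partial q_t/\partial p_t=\delta/(\delta\,p_t+1-p_t)^2$, and the delicate bookkeeping is to show that the term generated by perturbing $p_t$ either cancels against, or merges cleanly into, the inverse-probability-weighting terms already present in $m_1$ and $m_0$, leaving no residual correction outside the compact expression~\eqref{eq:varphi}. Tracking this term carefully is what pins down whether the simple weighted-AIPW form is exactly the efficient influence function or requires an extra correction factor.

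Finally, I would confirm efficiency and robustness. Because the model is nonparametric, the tangent space is the whole Hilbert space of mean-zero square-integrable scores, so any valid influence function is automatically the unique efficient one; it therefore suffices to verify that $\varphi-\psi$ has mean zero and satisfies the pathwise-derivative identity $\tfrac{d}{d\epsilon}\psi(P_\epsilon)|_{\epsilon=0}=\mathbb{E}[(\varphi-\psi)S]$ for every submodel. For the robustness claim advertised in the surrounding text, I would separately check that $\mathbb{E}[\varphi(z,\delta)]=\psi(\delta)$ continues to hold when either $\hat\mu$ or $\hat p_t$ (but not necessarily both) is correctly specified, which follows by substituting the correct component and showing the residual expectation vanishes.
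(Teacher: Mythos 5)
Your plan hinges on a step you explicitly leave unresolved, and that step is exactly where the trouble lies. When you differentiate the functional $\psi(\delta)=\mathbb{E}\left[q_t(\boldsymbol{x},\delta)\mu(\boldsymbol{x},1)+(1-q_t(\boldsymbol{x},\delta))\mu(\boldsymbol{x},0)\right]$ along a submodel, the perturbation of the propensity score inside $q_t$ contributes, via $\partial q_t/\partial p_t=\delta/(\delta p_t+1-p_t)^2$, the term
\begin{equation*}
\frac{\delta\,\left\{\mu(\boldsymbol{x},1)-\mu(\boldsymbol{x},0)\right\}}{\left\{\delta p_t(\boldsymbol{x})+1-p_t(\boldsymbol{x})\right\}^{2}}\,\bigl(t-p_t(\boldsymbol{x})\bigr),
\end{equation*}
and this term does \emph{not} cancel against the inverse-probability-weighted residuals in $m_1$ and $m_0$: it has conditional mean zero given $\boldsymbol{x}$, so it is invisible to any unbiasedness check, but it is not almost surely zero. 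Carrying your computation to completion therefore yields $q_t m_1+(1-q_t)m_0$ \emph{plus} this correction as the efficient influence function (this is precisely the known result for incremental propensity-score interventions), so your route, done honestly, contradicts the efficiency claim as literally stated rather than proving it --- unless one reinterprets $q_t$ as built from a fixed, pre-estimated $\hat p_t$ that is not part of the unknown distribution, in which case the propensity contribution you worry about vanishes by fiat and the delicate bookkeeping never arises. A proof proposal whose decisive step is a fork (``either cancels \ldots or requires an extra correction factor'') is incomplete; a completed proof must pick a branch, and the true branch is the inconvenient one.

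For comparison, the paper avoids this issue only because it proves far less than you set out to prove: its proof never computes a pathwise derivative and never touches the tangent space. It takes $\psi$ to be $\mathbb{E}[q_t\hat{\mu}(\boldsymbol{x},1)+(1-q_t)\hat{\mu}(\boldsymbol{x},0)]$ with the nuisances treated as given, and verifies $\int(\varphi-\psi)\,d\mathbb{P}=0$ by iterated expectation --- i.e., unbiasedness of the uncentered $\varphi$, which is the only part of your plan (together with your final double-robustness check) that goes through as written. So the concrete gap is this: your method proves a stronger property than the paper's method does, the theorem's ``efficient influence function'' wording needs that stronger property, and the key cancellation required for it fails.
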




\begin{proof}


Throughout we assume the observed data quantity $\psi$ can be estimated under the positivity assumption from Section~\ref{sc:asp}.
For the unknown ground-truth $\psi(\delta)$, we will prove $\varphi$ is the influence function of $\psi(\delta)$ in Eq.~\eqref{eq:e_psi} by checking 
\begin{equation}
    \int \hat{\psi}(y,x,t,\mathbb{P}) d \mathbb{P}=\int\left(\varphi(y,x,t,\delta)-\psi \right) d \mathbb{P}=0
    \label{eq:property}
\end{equation} Eq.~\eqref{eq:property} indicates that the uncentered influence function $\varphi$ is unbiased for $\psi$. Given $q_t(\boldsymbol{x},\delta)$ as the stochastic propensity score in Eq.~\eqref{eq:incrps},
we check the property~\eqref{eq:property} by
\begin{equation*}
\small
\begin{split}
    &\int\left(\varphi(y,x,t,\delta)-\psi\right) d \mathbb{P}\\
    &=\int\left\{q_t\cdot m_1(\boldsymbol{x},y)+(1-q_t)m_0(\boldsymbol{x},y)-\psi(\delta)\right\}d\mathbb{P}(y,x,t,\delta)\\
    &=\int\{q_t\frac{\mathbb{I}_{t=1}\cdot(y-\hat{\mu}(x,1))}{\hat{p}_t}+(1-q_t)\frac{\mathbb{I}_{t=0}\cdot(y-\hat{\mu}(x,0))}{1-\hat{p}_t}\\ &+q_t\hat{\mu}(x,1)+(1-q_t)\hat{\mu}(x,0)-\psi(\delta)\}d\mathbb{P}(y,x,t,\delta)\\
    &=\int\{q_t\frac{\mathbb{I}_{t=1}\cdot(y-\hat{\mu}(x,1))}{\hat{p}_t}+(1-q_t)\frac{\mathbb{I}_{t=0}\cdot(y-\hat{\mu}(x,0))}{1-\hat{p}_t} \\ &+q_t\hat{\mu}(x,1)+(1-q_t)\hat{\mu}(x,0)-\mathbb{E}[q_t\hat{\mu}(x,1)\\
    &+(1-q_t)\hat{\mu}(x,0)]\}d\mathbb{P}(y,x,t,\delta)\\
    &\overset{(1)}{=}\int\left\{q_t\frac{\mathbb{I}_{t=1}\cdot(y-\hat{\mu}(x,1))}{\hat{p}_t}\right\}d\mathbb{P}(y,x,t,\delta)\\ &+\int\left\{(1-q_t)\frac{\mathbb{I}_{t=0}\cdot(y-\hat{\mu}(x,0))}{1-\hat{p}_t}\right\}d\mathbb{P}(y,x,t,\delta)\\
    &=\int\left\{q_t\frac{\mathbb{I}_{t=1}\cdot y}{\hat{p}_t}+(1-q_t)\frac{\mathbb{I}_{t=0}\cdot y}{1-\hat{p}_t}\right\}d\mathbb{P}(y,x,t,\delta)\\
    &-\int\left\{q_t\frac{\mathbb{I}_{t=1}\cdot\hat{\mu}(x,1)}{\hat{p}_t}-(1-q_t)\frac{\mathbb{I}_{t=0}\cdot\hat{\hat{\mu}}(x,0)}{1-\hat{p}_t}\right\}d\mathbb{P}(x,t,\delta)\\
    &\overset{(2)}{=}0
\end{split}
\end{equation*}
The second equation (1) follows from the iterated expectation, and the second equation (2) follows from the definition of $\hat{\mu}(\boldsymbol{x},t)$ and the usual properties of conditional distribution $d\mathbb{P}(x,y,\delta)=d\mathbb{P}(y|x,\delta)d\mathbb{P}(x,\delta)$.
So far we have proved that $\varphi$ is the influence function of average treatment effect $\psi(\delta)$. 
We have proved that the uncentered efficient influence function can be used to construct unbiased semiparametric estimator for $\psi(\delta)$, i.e., that 
$\int \varphi\mathbb{P}=\psi$.
\end{proof}
\begin{algorithm}[!htbp]
\small
\caption{SIE: Stochastic Intervention Effect}
\label{alg:avg}
\begin{algorithmic}[1]
\renewcommand{\algorithmicrequire}{\textbf{Input:}}
\renewcommand{\algorithmicensure}{\textbf{Output:}}
\REQUIRE Observed units $\{z_i:(\boldsymbol{x}_i,t_i,y_i)\}_{i=1}^{n}$
\STATE Initialize a stochastic degree $\delta$. 
\STATE Randomly split $Z$ into $k$ disjoint groups
\WHILE{each group}
\STATE Fit the propensity score $\hat{p}_t(\boldsymbol{x}_i)$ by Eq.~\eqref{eq:ourps} 

\STATE Fit the potential outcome model $\hat{\mu}(\boldsymbol{x}_i,t_i)$
\STATE Compute $\tau_{i}=\hat{p}_{t}(\boldsymbol{x}_i) \hat{\mu}(\boldsymbol{x}_i,1)+\left(1-\hat{p}_{t}(\boldsymbol{x}_i)\right) \hat{\mu}(\boldsymbol{x}_i,0)$
\STATE Calculate $q_{t}( \boldsymbol{x}_i;\delta)$ by Eq.~\eqref{eq:incrps}
\STATE Calculate $m_{1}(\boldsymbol{x}_i)$ and $m_{0}(\boldsymbol{x}_i)$ by Eq.~\eqref{eq:m}
\STATE Calculate the influence function $\varphi(z_i,\delta)$ by Eq.~\eqref{eq:e_psi}.
\ENDWHILE
\STATE Compute $\hat{\tau}_{\text{ATE}}=\frac{1}{n} \sum_{i=1}^{n}\tau_{i}$
\STATE Compute $\hat{\tau}_{\text{SIE}}=\frac{1}{n} \sum_{i=1}^{n}(\varphi(z_i,\delta)-y_i)$

\ENSURE stochastic intervention effect $\tau_{\text{SIE}}$.
\end{algorithmic}
\end{algorithm}
\subsection{Asymptotic Behavior Analysis}

Theorem~\ref{th:sto} ensures that the counterfactual outcome $\hat{\psi}$ can be estimated by its influence function $\varphi$ that depends on the nuisance function $(\hat{\mu}(\cdot),\hat{p}_t(\cdot))$.
We further analyze the asymptotic behavior of the influnence function-based estimator $\hat{\psi}$ to prove that $\hat{\psi}$ attains robustness even if $\hat{\mu}$ is mis-specified. With this theorem, we can claim that our semiparametric estimator is robust to the estimation of $(\hat{\mu}(\cdot),\hat{p}_t(\cdot))$. This is crucial for incorporating machine learning into our stochastic causal inference framework. 

\begin{theorem}
\label{lm:error}
The stochastic outcome estimator $\hat{\psi}$ in Eq.~\eqref{eq:e_psi} is asymptotically linear with influence function $\psi$, i.e., 
\begin{equation}
    \hat{\psi}-\psi=\mathbb{P}_{n}\{\varphi(z);\eta\}+o_{p}(1 / \sqrt{n})
\end{equation}
\end{theorem}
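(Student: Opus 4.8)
The plan is to establish asymptotic linearity through the standard von Mises / sample-splitting decomposition of semiparametric efficiency theory, exploiting the Neyman-orthogonality of the AIPW-type influence function $\varphi$ already derived in Theorem~\ref{th:sto}. Writing $\hat\eta=(\hat\mu,\hat{p}_t)$ for the estimated nuisance pair and $\eta_0=(\mu_0,p_{t,0})$ for its population target, and recalling from the unbiasedness proved in Theorem~\ref{th:sto} that $\psi=\mathbb{P}[\varphi(z;\eta_0)]$, I would first write
\begin{equation*}
\begin{split}
\hat{\psi}-\psi &= \underbrace{\mathbb{P}_n[\varphi(z;\eta_0)]-\psi}_{\text{leading}} + \underbrace{(\mathbb{P}_n-\mathbb{P})[\varphi(z;\hat\eta)-\varphi(z;\eta_0)]}_{\text{empirical process}} \\
&\quad + \underbrace{\mathbb{P}[\varphi(z;\hat\eta)-\varphi(z;\eta_0)]}_{\text{bias}}.
\end{split}
\end{equation*}
The leading term is exactly the target quantity $\mathbb{P}_n\{\varphi(z);\eta_0\}$ (the empirical average of the centered influence function), so it remains only to show that the two remaining remainders are each $o_p(1/\sqrt{n})$.

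Second, I would dispose of the empirical-process (oscillation) term. The cleanest route uses the cross-fitting already built into Algorithm~\ref{alg:avg}, which splits $Z$ into $k$ disjoint folds and estimates $\hat\eta$ on the complement of the fold on which $\varphi$ is evaluated. Conditioning on the training fold renders $\hat\eta$ deterministic, so this term has conditional mean zero and conditional variance bounded by $\mathbb{P}[(\varphi(z;\hat\eta)-\varphi(z;\eta_0))^2]$; under consistency $\|\hat\eta-\eta_0\|=o_p(1)$ this variance is $o_p(1)$, and Chebyshev's inequality yields the $o_p(1/\sqrt{n})$ rate. (In the absence of sample splitting the same conclusion follows from a Donsker condition bounding the complexity of the nuisance class.)

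Third, and this is the crux, I would control the bias term $\mathbb{P}[\varphi(z;\hat\eta)-\varphi(z;\eta_0)]$ via a functional Taylor expansion of $\eta\mapsto\mathbb{P}[\varphi(z;\eta)]$ about $\eta_0$. Neyman orthogonality of the doubly-robust score forces the first-order pathwise-derivative term to vanish at $\eta_0$, leaving a second-order remainder. Using the \emph{Positivity} assumption of Section~\ref{sc:asp} to keep the denominators of $m_t$ bounded away from zero, an explicit computation of the conditional expectation shows this remainder factors into the product of the two nuisance errors,
\begin{equation*}
\mathbb{P}[\varphi(z;\hat\eta)-\varphi(z;\eta_0)] = O_p\!\left(\|\hat\mu-\mu_0\|_{2}\,\|\hat{p}_t-p_{t,0}\|_{2}\right),
\end{equation*}
which is precisely the source of double robustness: each nuisance error enters only through the product, so if each estimator attains the mild rate $o_p(n^{-1/4})$ their product is $o_p(n^{-1/2})$. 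The main obstacle is exactly this step, namely verifying by direct calculation that the contributions linear in $\hat\mu-\mu_0$ and in $\hat{p}_t-p_{t,0}$ cancel (the orthogonality) and that what survives is genuinely a cross-product rather than a surviving single first-order error. Combining the three pieces gives $\hat\psi-\psi=\mathbb{P}_n\{\varphi(z);\eta_0\}+o_p(1/\sqrt{n})$, which is the claimed asymptotic linear representation.
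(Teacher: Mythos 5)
Your proposal is correct and follows the same skeleton as the paper's proof, with the differences lying only in the sufficient conditions invoked for the remainder terms. Your three-term decomposition is algebraically identical to the paper's two-term split $\hat{\psi}-\psi=(\mathbb{P}_{n}-\mathbb{P})\varphi(z;\hat{\eta})+\mathbb{P}\{\varphi(z;\hat{\eta})-\varphi(z;\bar{\eta})\}$ (expand the first term around $\eta_0$ and use $\psi=\mathbb{P}[\varphi(z;\eta_0)]$ from Theorem~\ref{th:sto}), and your crux step is exactly the paper's: the iterated expectation collapses the bias term to
\begin{equation*}
\sum_{t \in\{0,1\}} \mathbb{P}\left[\frac{p_t(\boldsymbol{x})-\hat{p}_t(\boldsymbol{x})}{t\cdot \hat{p}_t(\boldsymbol{x})+(1-t)\{1-\hat{p}_t(\boldsymbol{x})\}}\left\{\mu(\boldsymbol{x}, t)-\hat{\mu}(\boldsymbol{x}, t)\right\}\right],
\end{equation*}
a pure cross-product of nuisance errors, which is then bounded by Cauchy--Schwarz. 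Where you diverge: for the empirical-process term the paper assumes the nuisance estimators lie in Donsker classes, whereas you lead with the cross-fitting/conditioning argument (with Donsker only as a fallback); your route is more permissive for flexible machine-learning nuisance estimators and matches the sample splitting actually performed in Algorithm~\ref{alg:avg}. For the bias term the paper imposes the asymmetric condition that $\hat{p}_t$ is a correctly specified parametric model with $\|\hat{p}_t-p_t\|=o_p(n^{-1/2})$ while $\hat{\mu}$ need only be consistent; you instead conclude via the symmetric product-rate condition (each nuisance at $o_p(n^{-1/4})$). Both suffice given the product structure; yours is the standard double-machine-learning condition and does not require either nuisance to attain a parametric rate, while the paper's asymmetric choice is what underwrites its claim that $\hat{\mu}$ can be fit by arbitrary, merely consistent learners. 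A minor point where you are more careful than the paper: bounding the bias by the product of norms requires the denominator $t\hat{p}_t+(1-t)(1-\hat{p}_t)$ to be bounded away from zero (your explicit appeal to positivity), not merely $0<\hat{p}_t<1$ as stated in the paper.
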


\begin{proof}
For notation simplicity, we use $z=(y,x,t,\delta)$ and $\eta=(\mu(\cdot),p_t(\cdot))$ for the true estimators in the proof.  
Suppose the estimator $\hat{\eta}=(\hat{\mu},\hat{p}_t)$ converges to some $\overline{\eta}=(\overline{\mu},\overline{p}_t)$ in the sense that $\|\hat{\eta}-\overline{\eta}\|=o_p(1)$, where either $\overline{p}_t=p_t$ or $\overline{\mu}=\mu$ correspond to the true nuisance function.
Therefor, we conclude that at least one nuisance estimator needs to converge to the correct function, but the other one can be misspecified. 
We denote the mispecified functions $\tilde{\mu}$ and $\tilde{p}_t$ in the neighborhood of $\mu$ and $p_t$, respectively.
From the fact that $\mathbb{P}\left\{\varphi\left(z ; p_t, \tilde{\mu}\right)\right\}=\mathbb{P}\left\{\varphi\left(z ; \tilde{p}_t, \mu\right)\right\}$, we have $\mathbb{P}\{\varphi)(z;\overline{\eta}\}=\mathbb{P}\{\varphi)(z;\eta\}=\psi$ for any $\overline{p}_t$ and $\overline{\mu}$. We can write
\begin{equation}
    \hat{\psi}-\psi=\left(\mathbb{P}_{n}-\mathbb{P}\right) \varphi(z ; \hat{\eta})+\mathbb{P}\{\varphi(z ; \hat{\eta})-\varphi(z ; \bar{\eta})\}
    \label{eq:4.17}
\end{equation}
If $\hat{\mu}$ and $\hat{p}_t$ are usual parametric functions in Donsker classes~\cite{dudley2010universal}, then $\varphi(z ; \hat{\eta})$ is enabled with Donsker property, i.e.,
\begin{equation}
    \left(\mathbb{P}_{n}-\mathbb{P}\right) \varphi(z ; \hat{\eta})=\left(\mathbb{P}_{n}-\mathbb{P}\right) \varphi\left(z ; \eta\right)+o_{p}(1 / \sqrt{n})
    \label{eq:donsker_m}
\end{equation}
Substitute Eq.~\eqref{eq:donsker_m} to Eq.~\eqref{eq:4.17}, we have
\begin{equation}
    \hat{\psi}-\psi=\left(\mathbb{P}_{n}-\mathbb{P}\right) \varphi(z ; \eta)+\mathbb{P}\{\varphi(z ; \hat{\eta})-\varphi(z ; \bar{\eta})\}+o_{p}(1 / \sqrt{n})
    \label{eq:4.19}
\end{equation}
The iterated expectation of term $\mathbb{P}\{\varphi(z ; \hat{\eta})-\varphi(z ; \bar{\eta})\}$ in Eq.~\eqref{eq:4.19} equals
\begin{equation}
\small
    \sum_{t \in\{0,1\}} \mathbb{P}\left[\frac{p_t(\boldsymbol{x})-\hat{p_t}(\boldsymbol{x})}{t\cdot \hat{p_t}(\boldsymbol{x})+(1-t)\{1-\hat{p_t}(\boldsymbol{x})\}}\left\{\mu(\boldsymbol{x}, t)-\hat{\mu}(\boldsymbol{x}, t)\right\}\right]
\end{equation}
According to the fact that $0<\hat{p}_t<1$ and the Cauchy-Schwarz inequality $\mathbb{P}(f\cdot g)\leq \|f\|\|g\|$, then $\mathbb{P}\{\varphi(z ; \hat{\eta})-\varphi(z ; \bar{\eta})\}\leq$
\begin{equation}
     \sum_{t \in\{0,1\}}\left\|p_t(\boldsymbol{x})-\hat{p}_t(\boldsymbol{x})\right\|\left\|\mu(\boldsymbol{x}, t)-\hat{\mu}(\boldsymbol{x}, t)\right\|
\end{equation}
Therefore, if $\hat{p}_t$ a correct parametric model for propensity score, so that $\|\hat{p}_t-p_t\|=o_p(\frac{1}{\sqrt{n}})$, then we only need $\hat{\mu}$ to be consistent, $\|\hat{\mu}-\mu\|=o_p(1)$ to allow $\mathbb{P}\{\varphi(z ; \hat{\eta})-\varphi(z ; \bar{\eta})\}=o_{p}(\frac{1}{\sqrt{n}})$ asymptotically negligible. Then our influence-based estimator satisfied $\hat{\psi}-\psi=\left(\mathbb{P}_{n}-\mathbb{P}\right) \varphi\left(z ; \eta\right)+o_{p}(\frac{1}{\sqrt{n}})$. 

\end{proof}
According to Theorem~\eqref{lm:error}, if the propensity score model in Eq.~\eqref{eq:ourps} is unbiased, the potential outcome model can be estimated by $\hat{\psi}$ in a flexible manner. Because the influence function we defined contains all information about an estimator’s asymptotic behavior (up to $o_{p}(1 / \sqrt{n})$ error).


\section{Stochastic Intervention Optimization}
Estimating the stochastic intervention effect is not enough; we are more interested in ``what is the optimal level/degree of treatment for a patient to achieve the most expected outcome?''. A direct way to find the optimal treatment is through reinforcement learning, which focuses on finding policy/intervention for controlling dynamical systems with the goal of maximizing the desired outcome on downstream decision-making tasks. This is done by the agent repeatedly observing its state, taking action (according to a policy/intervention), and receiving a reward. Over time the agent modifies its policy to maximize its long-term desired outcome.
In this paper, we focus particularly on model-free reinforcement learning algorithms, which have become popular in offering off-the-shelf solutions without requiring models of the system dynamics~\cite{feinberg2018model,feinberg2018model}. 
However, the intervention is stochastic rather than deterministic, which tends to result in large training variances in action space. Handling large variance is a significant challenge in model-free reinforcement learning (RL)~\cite{cheng2019control}, which would result in the degenerate performance in the intervention optimization.

To alleviate the aforementioned issue, we consider the basic random search method, which explores in the parameter space rather than the action space and thus achieves the optimal expected outcome in a more efficient manner.
We model the stochastic intervention using the stochastic propensity score $\hat{q}_t(\boldsymbol{x},\delta)$, and 
look for the optimal stochastic interventions parameter $\Delta^* \in \mathbb{R}^{n\text{x}1}$ such that:

\begin{equation}
    \Delta^* =\arg\max_{\Delta}\sum_{i=1}^{n}\varphi(z_i,\Delta
    )
    \label{eq:delta}
 \end{equation}
\label{eq:strategy}

Note that the optimization problem in Eq.~\eqref{eq:delta} is non-differentiable.
To avoid using further assumptions for solving it, we formulate a customised reinforcement learning algorithm~\cite{whitley1994genetic} (RS-SIO) to exploit the search space. 
The main advantage of RS-SIO is model-agnostic which can handle with any black-box functions and flexibly deal with any data type including continuous and categorical features. Therefore, with modifications specific to the intervention effect estimation, RS-SIO solves Eq.~\eqref{eq:strategy} through the discovery process of trial-and-error search \cite{qiang2011reinforcement, whitehead1991learning, barto1995reinforcement} which gradually updates the stochastic parameters in every step based on the rewards. Particularly, the algorithm firstly initializes the stochastic intervention parameter $\Delta_0 = \boldsymbol{0} \in \mathbb{R}^{n\text{x}1}$ and samples a set of $\boldsymbol{\delta}$ having the same size as $\Delta_0$. Thereafter, for each $\boldsymbol{\delta}$, we compute the rewards when the search process moves toward to the positive ($\Phi(\boldsymbol{\delta_k}, +)$) and negative direction ($\Phi(\boldsymbol{\delta_k}, -)$), and then select the number of $b$ largest awards for these directions as $\text{max}\{\Phi(\boldsymbol{\delta_k}, +), \Phi(\boldsymbol{\delta_k}, -)\}$. In order to update the stochastic parameters $\Delta$, we exploit the update directions $\frac{1}{b}\sum_{k=1}^b[\Phi(\boldsymbol{\delta_k}, +) - \Phi(\boldsymbol{\delta_k}, -)]\boldsymbol{\delta_k}$. The full stochastic intervention optimization algorithm is shown in Algorithm~\ref{alg:IEO}. 

\begin{algorithm}[H]
\small
\caption{Random Search based Reinforcement Learning for SIO (RS-SIO)}
\label{alg:IEO}
\begin{algorithmic}[1]
\renewcommand{\algorithmicrequire}{\textbf{Input:}}
\renewcommand{\algorithmicensure}{\textbf{Output:}}
\REQUIRE Observed units $\{z_i:(\boldsymbol{x}_i,t_i,y_i)\}_{i=1}^{n}$, step-size $\alpha$, standard deviation of the exploration noise $\varv$, number of steps $l$, number of top-performing directions $b$. 

\STATE Initialize the stochastic intervention parameter ${\Delta}_0=\boldsymbol{0} \in \mathbb{R}^{n\text{x}1}$
\STATE Sample $\boldsymbol{\delta}_1$, $\boldsymbol{\delta}_2$,.., $\boldsymbol{\delta}_m$ of the same size as  ${\Delta}_0$.

\FOR{$j=1,\cdots,l$}
\FOR {$k=1,\cdots,m$}
    \FOR{$i=1,\cdots,n$}
    \STATE Compute $q_t(\boldsymbol{x_i},\boldsymbol{\delta_k})$ by Eq.~\eqref{eq:incrps}
    \STATE Compute $m_{1}(\boldsymbol{x}_i)$ and $m_{0}(\boldsymbol{x}_i)$ by Eq.~\eqref{eq:m}
    \STATE Compute $\varphi(z_i,\Delta_j + \varv \boldsymbol{\delta_k})$ and $\varphi(z_i,\Delta_j - \varv \boldsymbol{\delta_k})$ by Eq.~\eqref{eq:varphi}. 
    \ENDFOR
\ENDFOR
\FOR {$k=1,\cdots,m$}
\STATE Compute the reward 
\begin{equation}
\begin{split}
  \Phi(\boldsymbol{\delta_k}, +)=\sum_{i=1}^{n}\varphi(z_i,\Delta_j + \varv \boldsymbol{\delta_k}),\quad
\Phi(\boldsymbol{\delta_k}, -)=\sum_{i=1}^{n}\varphi(z_i,\Delta_j - \varv \boldsymbol{\delta_k})  
\end{split}
\end{equation}
\ENDFOR

\STATE Sort $\boldsymbol{\delta_k}$ by  $\text{max}\{\Phi(\boldsymbol{\delta_k}, +), \Phi(\boldsymbol{\delta_k}, -)\}$ and select $b$ top-performing directions.


\STATE Update 
\begin{equation}
    \Delta_{j+1} = \Delta_j + \frac{\alpha}{b}\sum_{k=1}^b[\Phi(\boldsymbol{\delta_k}, +) - \Phi(\boldsymbol{\delta_k}, -)]\boldsymbol{\delta_k} 
\end{equation}

\ENDFOR


\ENSURE $\Delta_j$
\end{algorithmic}
\end{algorithm}

\section{Experiments and Results}
\label{section:experiment}
In this section, we conduct intensive experiments and compare our framework with state-of-the-art methods on two tasks: treatment effect estimation and stochastic intervention effect optimization. Recall that the influence-based estimator $\varphi$ depends on the nuisance function of propensity score $p_t$ and outcome $\mu$. We first perform average treatment effect estimation to confirm that $\hat{p}_t$ and $\hat{\mu}$ are unbiased and robust estimators. 
Moreover, the stochastic intervention optimization task is carried out to demonstrate the effectiveness of our RS-SIO.


\subsection{Baselines}
\label{sec:base}

We briefly describe the comparison methods which are used in two tasks of treatment effect estimation and stochastic intervention optimization.

Evaluating the performance of SIE is not an easy task, because the ground-truth counterfactual outcome is unobserved in practice. On the contrary, the benchmark datasets having two potential outcomes are available for ATE estimation. Therefore, we perform ATE estimation to evaluate the robustness of $\hat{p}_t$ and $\hat{\mu}$ thus to indirectly evaluate the performance of SIE.
We use Gradient Boosting Regression with 100 regressors for the potential outcome models $\hat{\mu}$.
We compare our proposed estimator (SIE) with the following baselines including Doubly Robust Leaner~\cite{10.5555/3104482.3104620} (LinearDRLearner and ForestDRLearner), IPWE~\cite{austin2015moving}, BART~\cite{hill2011bayesian}, Causal Forest~\cite{wager2018estimation, athey2019generalized}, TMLE~\cite{gruber2011tmle} and OLS~\cite{goldberger1964econometric}. 
Regarding implementation and parameters setup, we adopt Causal Forest~\cite{wager2018estimation, athey2019generalized} with 100 trees, BART \cite{hill2011bayesian} with 50 trees and TMLE~\cite{gruber2011tmle} from the libraries of cforest, pybart and zepid in Python. For Doubly Robust Learner (DR) \cite{10.5555/3104482.3104620}, we use the two implementations, i.e. LinearDRL and ForestDRL from the package EconML~\cite{econml} with Gradient Boosting Regressor with 100 regressors as the regression model, and Gradient Boosting Classifier with 200 regressors as the propensity score model. Ultimately, we use package DoWhy \cite{dowhy} for IPWE \cite{austin2015moving} and OLS.

For stochastic intervention optimization, we compare our proposed method (RS-SIO) with Separate Model Approach (SMA) with different settings. SMA is a uplift  modeling  method that estimates  the  user-level  incremental  effect  of  a  treatment using  machine  learning  models. SMA \cite{zaniewicz2013support,Personalized_Medicine} aims to build two separate regression models for the outcome prediction in the treated and controlled group, respectively. Under the setting of SMA, we apply four well-known models for predicting outcome including Random Forest (SMA-RF) \cite{soltys2015ensemble, grimmer2017estimating}, Gradient Boosting Regressor (SMA-GBR) \cite{friedman2001greedy}, Support Vector Regressor (SMA-SVR) \cite{zaniewicz2013support}, and AdaBoost (SMA-AB) \cite{solomatine2004adaboost}. We also compare the performance of these models with the random policy to justify that optimization algorithms can help to target the potential customers to generate greater revenue. 
For the settings of SMA, we use Gradient Boosting Regressor with 1000 regressors, AdaBoost Regression with 50 regressors, and Random Forest Tree Regressor with 100 trees.

\subsection{Datasets} 
\texttt{IHDP} \cite{hill2011bayesian} is a standard semi-synthetic dataset used in the \emph{Infant Health and Development Program}, which is a popularly used semi-synthetic benchmark containing both the factual and counterfactual outcomes. We conduct the experiment on 100 simulations of \texttt{IHDP} dataset, in which each dataset is divided into training and testing set\footnote{\url{http://www.fredjo.com/files/ihdp_npci_1-100.train.npz} and \url{http://www.fredjo.com/files/ihdp_npci_1-100.test.npz}}. The training dataset is highly imbalanced with 139 treated and 608 controlled units out of 747 units, respectively, whilst the testing dataset has 75 units. Each unit has 25 covariates representing the individuals' characteristics. The outcomes are their IQ scores at age three~\cite{dorie2016npci}. 


Online promotion dataset (\texttt{OP} Dataset) provided by EconML project \cite{econml} is chosen to evaluate stochastic intervention optimization\footnote{\url{https://msalicedatapublic.blob.core.windows.net/datasets/Pricing/pricing_sample.csv}}. This dataset consists of 10k records in online marketing scenario with the treatment of discount price and the outcome of revenue, each represents a customer with 11 covariates. We split the data into two parts: 80\% for training and 20\% for testing set. We run 100 repeated experiments with different random states to ensure the model outcome reliability. We aim to investigate how to maximize the revenue by applying different price policies to different customers. 

\texttt{Lalonde} \footnote{\url{https://users.nber.org/~rdehejia/data/.nswdata2.html}} \cite{dehejia1999causal, dehejia2002propensity} is the real-world dataset about the men in the \emph{National Supported Work Demonstration} who were or were not provided the on-job training for more than nine months. Each unit has six features, including age, education, black (1 if black, 0 otherwise), Hispanic (1 if Hispanic, 0 otherwise), married and degree. The outcomes are their earnings in 1975 and 1978 with 297 treated and 425 control observations. The main goal of this dataset is to determine the monetary benefits of the job training on the people. For this dataset, we conduct experiments to find the optimal policy such that their earnings in 1975 and 1978 are maximized. We also repeat experiments 100 times with different random states to ensure model stability. 

\subsection{Evaluation Metrics}
In this section, we briefly describe the two evaluation metrics used for stochastic intervention effect estimation and stochastic intervention optimization, respectively.
\begin{itemize}
    \item \textbf{Stochastic Intervention Effect Estimation}. Based on average treatment effect (ATE) in Eq.~\eqref{eq:tau}, we evaluate the performance of treatment effect estimation by the mean absolute error between the estimated and true ATE: 
\begin{equation}
    \epsilon_{ATE} = |\hat{\tau}_{\text{ATE}} - \tau_{\text{ATE}}|
\end{equation}
\item \textbf{Stochastic Intervention Optimization}. 
Followed by the uplifting model studies~\cite{zhao2017uplift, hitsch2018heterogeneous}, we use the expected value of the outcome under the policy proposed by the models as the main metric, which can be measured as:
\begin{equation}
\label{eq:erupt}
\begin{split}
  \hat{y} &= \mathbb{E}[\boldsymbol{y} | \boldsymbol{t} = \pi(\boldsymbol{x})] \\ &= \frac{1}{n}\sum_{i=0}^n\sum_{a=0}^1 \frac{1}{p_i}y_{i}\mathbb{I}\{t_{i} = \pi(x_{i})\}\mathbb{I}\{t_{i}=a\}
\end{split}
\end{equation}

where $p_i$ is the propensity score of individual $i$, $\mathbb{I}\{.\}$ is the indicator function with 1 for true condition and 0 otherwise and $\pi(\boldsymbol{x})$ is the proposed policy. Our method uses $\pi(\boldsymbol{x})=\Delta(\boldsymbol{x})$ with $\Delta$ is the stochastic intervention parameter. Particularly, the expected outcome is computed as if the predicted treatment matches the current treatment, the expected outcome is scaled by the inverse of propensity score $y_{i} / p_{i}$, otherwise, equals zero.

\end{itemize}




\subsection{Results and Discussions}

In this section, we aim to report the experimental results of 1) how our proposed estimator (SIE) can accurately estimate the average treatment effect; 2) how our optimization algorithm (RS-SIO) can be used for finding optimal stochastic intervention in real-world datasets.

\subsubsection{Treatment Effect Estimation}
The results of $\epsilon_{ATE}$ derived from \texttt{IHDP} dataset with 100 simulations and \texttt{OP} dataset with 100 repeated experiments are presented in the Table~\ref{table:ihdp}. As seen clearly, amongst all approaches, our proposed estimator SIE achieves the best performance under $\epsilon_{ATE}$ in both two datasets, followed by TMLE for \texttt{IHDP} dataset, and LinearDRL and ForestDRL for \texttt{OP} dataset. Particularly, on \texttt{IHDP}, SIE outperforms all other methods in both training and testing sets. In order to investigate the impact of data size chosen on estimation, we also run experiments and plot the performance of models in different data sizes in Figure~\ref{fig:ihdp}. Notably, SIE consistently produces the more accurate average treatment effect than others as the data size increases, while TMLE is ranked second in this dataset. LinearDRL and Causal Forest also produce very competitive results, whereas IPWE performs the worst. For the experimental results on the online promotion dataset, SIE consistently achieves the outstanding performance under $\epsilon_{ATE}$, followed by the performance of LinearDRL and ForestDRL, while the competitive results are also recorded with BART. It is also worthy to note that although TMLE performs well in the training set, its performance likely degrades when dealing with out-of-sample data in the testing set. Regarding the computational time, IPWE and BART are the best-performing and worst-performing, respectively. Our proposed method is ranked third among the baselines for both two datasets, which is acceptable 
in consideration of our superior performances on treatment effect estimation compared to IPWE and BART.

\begin{table}[!htb]	
	\centering
	\small
	\caption{$\epsilon_{ATE}$ and running time of baselines on \texttt{IHDP}(lower is better).}
	
	\resizebox{\columnwidth}{!}{
	\begin{tabular}{ccccccc}
		\multirow{2}[3]{*}{Method} & \multicolumn{3}{c}{\texttt{IHDP} Dataset} & \multicolumn{3}{c}{\texttt{OP} Dataset} \\
\cmidrule(lr){2-4}  \cmidrule(lr){5-7} 

& Train & Test & Time (ms) & Train & Test & Time (s)\\

\hline
		OLS & 0.746 $\pm$ 0.140 & 1.264 $\pm$ 0.250  & 242.498 $\pm$ 0.000 & 5.906 $\pm$ 0.004 & 5.906 $\pm$ 0.004 &   8.891 $\pm$ 0.000  \\
BART & 1.087 $\pm$ 0.120 & 2.808 $\pm$ 0.100 & 2353.843 $\pm$ 0.000   & 0.504 $\pm$ 0.042 &   0.505 $\pm$ 0.043 &   14.180 $\pm$ 0.000\\
Causal Forest & 0.360 $\pm$ 0.050 & 0.883  $\pm$ 0.614 & 180.100 $\pm$ 0.000  & 3.520 $\pm$ 0.034 &    3.520 $\pm$ 0.034 &   5.907 $\pm$ 0.000\\
TMLE & 0.326 $\pm$ 0.060 & 0.831 $\pm$ 1.750 & 584.659 $\pm$ 0.000 & 0.660 $\pm$ 0.000 &   3.273 $\pm$ 0.000  &   9.723 $\pm$ 0.000\\
ForestDRLearner & 1.044 $\pm$ 0.040 & 1.224 $\pm$ 0.080 & 241.148 $\pm$ 0.000 & 0.240 $\pm$ 0.014 &   0.241 $\pm$ 0.013 &   7.807 $\pm$ 0.000\\
LinearDRLearner & 0.691 $\pm$ 0.080 & 0.797 $\pm$ 0.170 & 269.193 $\pm$ 0.000 & 0.139 $\pm$ 0.009 &    0.139 $\pm$ 0.008  &   7.107 $\pm$ 0.000\\
IPWE & 1.701 $\pm$ 0.140 &  5.897  $\pm$ 0.300 & 84.531 $\pm$ 0.000 & 5.908 $\pm$ 0.000 &    5.908 $\pm$ 0.015 &   2.1725 $\pm$ 0.000\\
\hline
SIE & \textbf{0.284 $\pm$ 0.050} & \textbf{0.424 $\pm$ 0.090 }  & 200.135 $\pm$ 0.000 &   \textbf{0.137 $\pm$ 0.000} &   \textbf{0.119 $\pm$ 0.000} &   7.002 $\pm$ 0.000\\
\hline

	\end{tabular}}
	\label{table:ihdp}
\end{table}

\begin{figure}[!htb]
\label{fig:ihdp}
\centerline{\includegraphics[width=\textwidth]{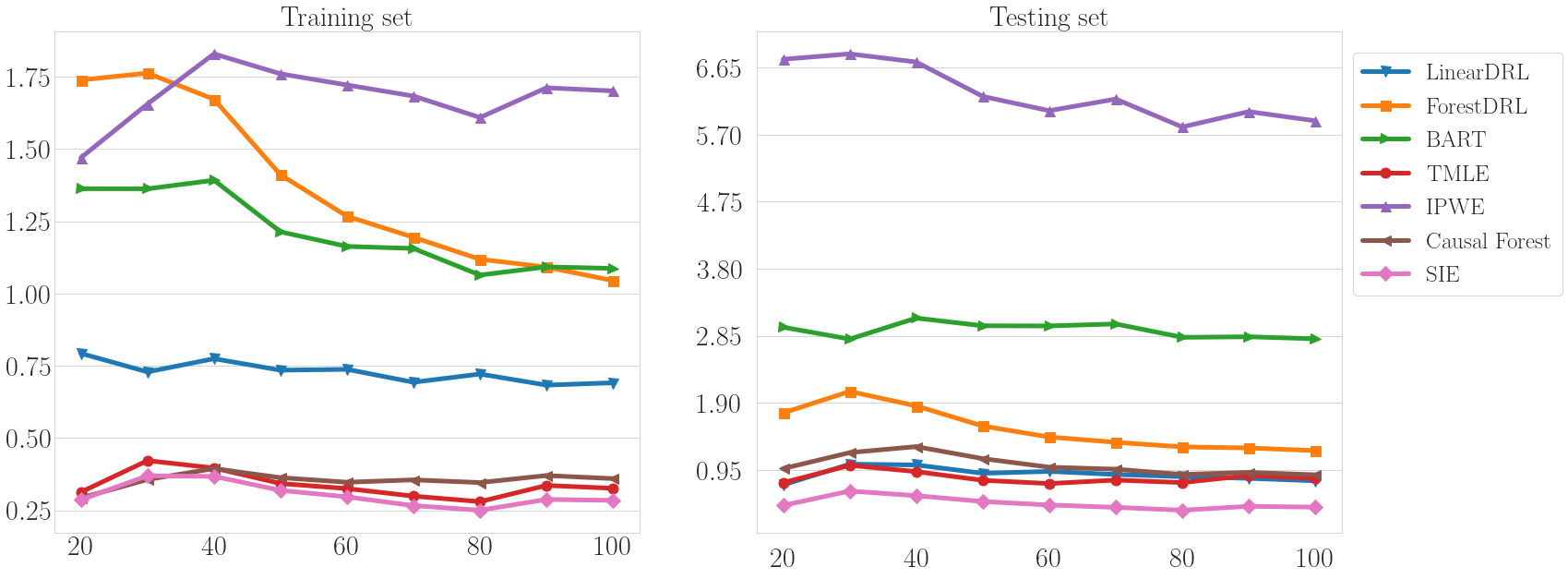}}
\caption{$\epsilon_{ATE}$ of baselines on \texttt{IHDP} dataset with different samples.}
\label{fig:ihdp}
\end{figure}

\subsubsection{Stochastic Intervention Optimization}
For the online promotion dataset, we model the revenue as the expected outcome of each customer under the policy/intervention.
Figure~\ref{fig:revenue} presents the revenue of uplifting modeling methods with different data sizes including 1000, 5000 and 10000 records. 
We set step $l=100$ for our proposed method (RS-SIO). RS-SIO compares favorably to recent uplift modeling techniques that
optimize the policy (or intervention) on treatment to maximize the expected outcome.
Apparently, RS-SIO generally produces the highest revenue in datasets with different samples, while SMA-ABR achieves the second-best performance with a very competitive result. Moreover, we find that no significant difference in the performance of SMA with different settings. In contrast, random stochastic intervention produces the lowest revenue, which fails to target the customers for the promotion. On the other hand, Figure~\ref{fig:earning} illustrates the predicted earnings in 1975 and 1978 by different methods. As can be seen, the maximum earning is produced by our proposed method, while random policy/intervention produces the lowest earnings in 1975 and 1978. SMA-Ridge and SMA-GBR achieve competitive performance in this dataset. The possible reason behind our outstanding performance is that instead of focusing on predicting the outcomes like SMA, we directly intervene into the propensity score to produce the best stochastic intervention. 

\begin{figure}[!htb]
\centerline{\includegraphics[width=0.65\textwidth]{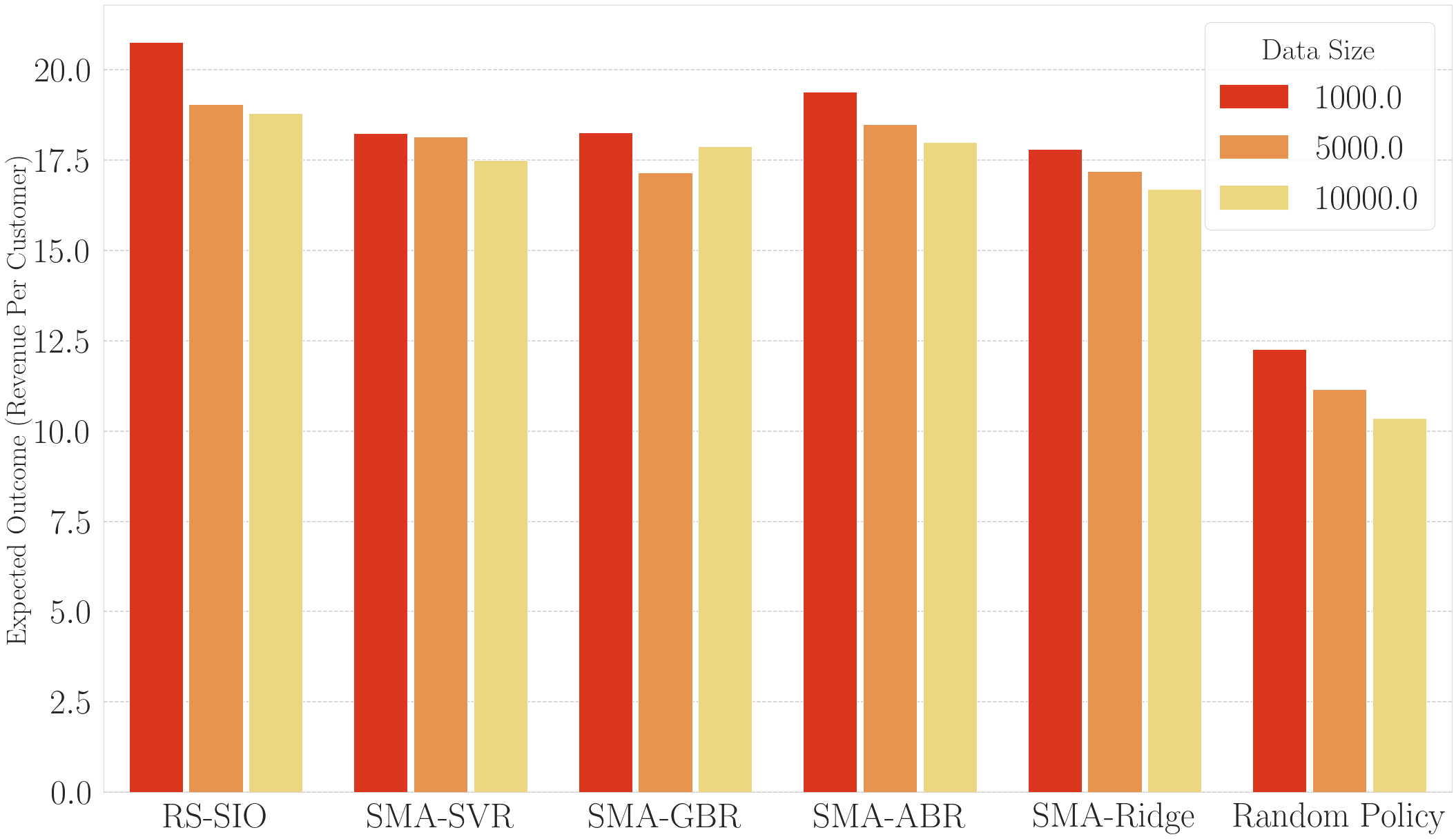}}
\caption{Intervention optimization on \texttt{OP} dataset by different baselines.}
\label{fig:revenue}
\end{figure}

\begin{figure}[!htb]
\includegraphics[width=\textwidth]{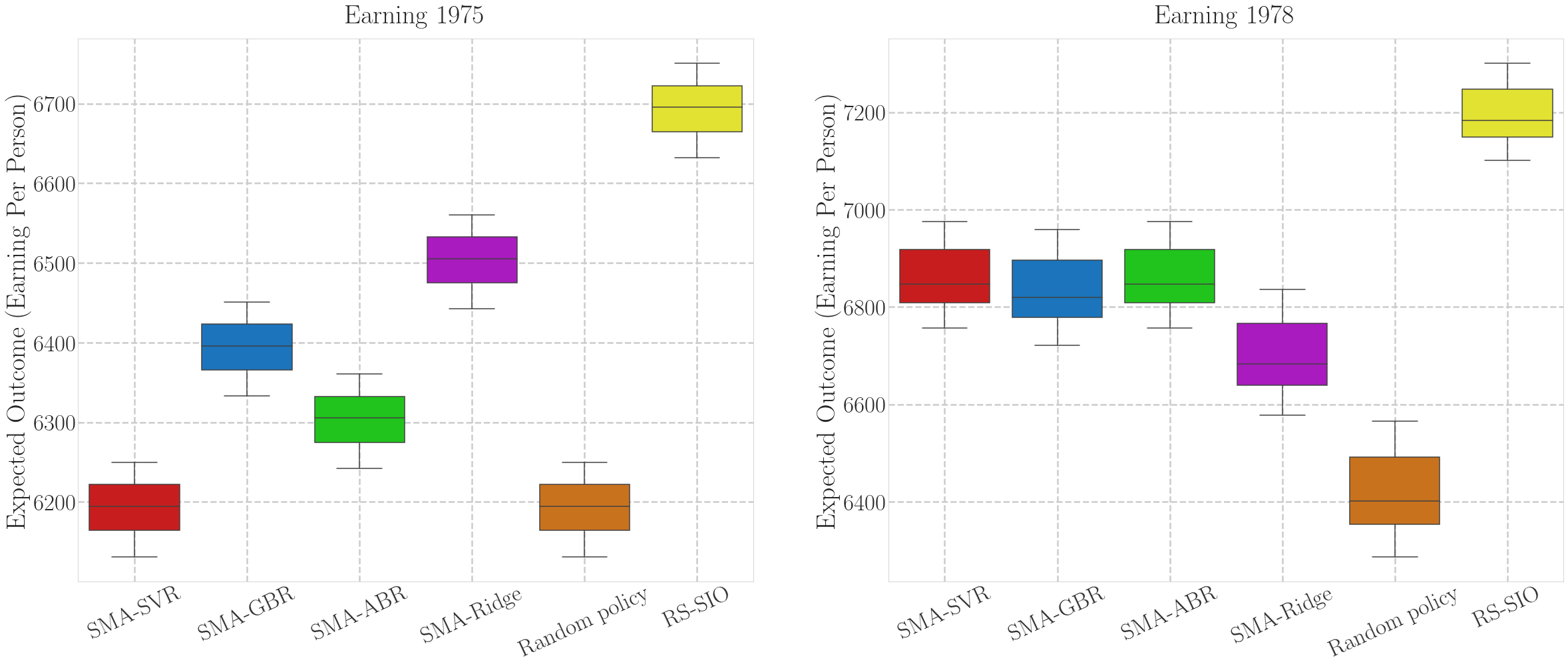}
\caption{Intervention optimization on {\texttt{Lalonde}} dataset by different baselines.}
\label{fig:earning}
\end{figure}

\section{Conclusion}
\label{section:conclusion}
We have developed a causal inference framework that admits the stochastic intervention in treatment effect estimation and designs an effective causal solution for the intervention effect optimization. 
In general, the contribution of this study is twofold. Firstly, we propose a novel treatment effect estimator based on stochastic propensity score, which can effectively learn the trajectory of the stochastic intervention effect. Secondly, we design a reinforcement learning algorithm to find the optimal intervention for maximizing the expected outcome, thus providing causal insights for an effective decision-making process. We provide theoretical guarantees for the stochastic intervention effect estimator to achieve double robustness and fast parametric convergence rates.      
Extensive numerical results justify that our framework outperforms state-of-the-art baselines in both treatment effect estimation and stochastic intervention optimization.

One limitation of our causal framework is that the stochastic intervention is set to static data,  i.e., the observational data are time-independent. In many real-world applications, however, events change over time, e.g., each unit may receive a stochastic intervention multiple times, and the timing of these interventions may differ across units\cite{kennedy2017nonparametric, hill2011bayesian, galagate2016causal}.
Of practical interest is to perform a more detailed empirical study on the time-dependent stochastic intervention. 

\bibliography{mybibfile}

\end{document}